\relax
\documentclass[letterpaper]{article} 
\usepackage{aaai19}
\usepackage{times}
\usepackage{helvet}
\usepackage{courier}
\usepackage[hyphens]{url}
\usepackage{graphicx}
\urlstyle{rm}
 
\usepackage{graphicx}
\frenchspacing
\setlength{\pdfpagewidth}{8.5in} 
\setlength{\pdfpageheight}{11in}

\usepackage{amssymb}
\usepackage{amsmath}

\usepackage[vlined, ruled, boxed,linesnumbered]{algorithm2e}
\SetKwInOut{Parameters}{Parameters}
\SetKwBlock{Setup}{Setup}{}
\SetKwFor{Loop}{Loop}{}

\usepackage[utf8]{inputenc}
\usepackage[english]{babel}
\usepackage{amsthm}
\theoremstyle{definition}

\newtheorem*{definition*}{Definition}
\newtheorem{lemma}{Lemma}

\usepackage{multirow}

\frenchspacing
\setlength{\pdfpagewidth}{8.5in}
\setlength{\pdfpageheight}{11in}
\pdfinfo{
/Title (Efficiently Exploring Ordering Problems through Conflict-directed Search)
/Author (Jingkai Chen, Cheng Fang, David Wang, Andrew Wang, Brian Williams)
}
\setcounter{secnumdepth}{2}  

\begin{document}

\title{Efficiently Exploring Ordering Problems through Conflict-directed Search}

\author{Jingkai Chen \and Cheng Fang \and David Wang \and Andrew Wang \and Brian Williams\\
MIT Computer Science and Artificial Intelligence Laboratory\\
}

\maketitle
\begin{abstract}
\begin{quote}
In planning and scheduling, solving problems with both state and temporal constraints is hard since these constraints may be highly coupled. Judicious orderings of events enable solvers to efficiently make decisions over sequences of actions to satisfy complex hybrid specifications. The ordering problem is thus fundamental to planning. Promising recent works have explored the ordering problem as search, incorporating a special tree structure for efficiency. However, such approaches only reason over partial order specifications. Having observed that an ordering is inconsistent with respect to underlying constraints, prior works do not exploit the tree structure to efficiently generate orderings that resolve the inconsistency. In this paper, we present Conflict-directed Incremental Total Ordering (CDITO), a conflict-directed search method to incrementally and systematically generate event total orders given ordering relations and conflicts returned by sub-solvers. Due to its ability to reason over conflicts, CDITO is much more efficient than Incremental Total Ordering. We demonstrate this by benchmarking on temporal network configuration problems that involve routing network flows and allocating bandwidth resources over time.
\end{quote}
\end{abstract}

\section{Introduction}
Adaptive Large Neighborhood Search has shown impressive results in vehicle routing problems with time windows, which iteratively destroys and repairs the total order of tasks to achieve high-quality plans by using heuristics \cite{ropke2006adaptive}. Timeline-based planners such as tBurton use ordering algorithms to unify multiple timelines such that sub-solvers can efficiently check the plan's consistency \cite{wang2015tburton}.

Similar to tBurton, many hybrid planners are designed in a hierarchical architecture where abstract tasks are generated by heuristic search or partial-order planning algorithms and then refined into more concrete courses of actions by resource managers or schedulers. There has been considerable progress in the development of these hybrid planners for solving problems with temporally evolving numeric effects, complex objective functions, automatic timed transitions, or temporal uncertainty \cite{coles2012colin,barreiro2012europa,fernandez2018scottyactivity,wang2015tburton,umbrico2018timeline}. As these abstract tasks are always partially ordered, these planners impose either total or partial orders on the tasks in order to enable grounded consistency check (e.g., temporal consistency) or refinement (e.g., motion trajectory generation). This motivates us to develop an algorithm to efficiently order these abstract tasks over time by interacting with different underlying solvers such that a consistent plan exists under this ordering.

In this paper, we introduce Conflict-directed Incremental Total Ordering (CDITO), a systematic and conflict-directed search method to generate consistent total orders of the start and end events of abstract tasks. CDITO starts with a total order of these events and incrementally alters inconsistent partial orders by reasoning over violated ordering relations or inconsistency discovered by the sub-solvers in terms of resource capacities or event schedules.

CDITO is built upon the idea of searching the total order tree introduced in \cite{ono2005constant}. In this tree, total orders are arranged in a special structure such that some subtrees can be pruned with respect to violated partial orders. Another successful work based on \cite{ono2005constant} is Incremental Total Ordering (ITO), which is used in tBurton to unify multiple timelines \cite{wang2015}. ITO plays an important role in improving the time and space efficiency of tBurton by using a queue to store operations of altering partial orders. We are inspired by ITO to reason over these operations and extend it to account for inconsistency in sub-problems.

For total orders found to be inconsistent, we may reason over which partial orders led to this inconsistency. This allows us to discover conflicting partial orders, which must be resolved. As we are working over a total order tree, this problem is one of conflict-directed search, and thus we may leverage the insights of \cite{williams2007conflict} to focus our search by pruning search space. In addition to common conflict-directed search techniques, CDITO leverages the special structure of the total order tree to quickly jump to promising candidate orders.

The idea of interacting with sub-solvers is also used in Satisfiability Modulo Theories (SMT) solvers such as Z3 \cite{de2008z3}. As SMT solvers determine satisfiability of formulas with respect to some background theories, it can be used to address complex systems of constraints. Similar to SMT solvers, CDITO is able to interact with sub-solvers through propositional combinations of partial orders, which enables our approach to leverage advanced underlying solvers supporting various user-defined features for expressivity. 

The remainder of this paper is organized as follows. We first introduce a motivating example, a temporal network configuration problem that involves routing network flows and allocating bandwidth resources with respect to the requirements such as loss, delay, and throughput (Section~\ref{section:example}). In Section~\ref{section:def}, we give the formal definition of our ordering problem. We also formulate the motivating example as an ordering problem in this section. In Section~\ref{section:approach}, we introduce the total order tree and a search strategy within this tree. We also present the methods to extract implicit ordering relations and resolve ordering conflicts. Then, we introduce the algorithmic details of CDITO. In Section~\ref{section:experiment}, we benchmark CDITO against ITO on temporal network configuration problems with different size and complexity and discuss the empirical results.

\section{Motivating Example}\label{section:example}
Consider a network configuration management problem in which we need to schedule and route three network flows and allocate bandwidth resources of a network. In this network, the links have different characteristics of loss, delay, and bandwidth capacity, as shown in Figure~\ref{fig:example_topology}.

\begin{figure}[ht]
\centering
\includegraphics[width=0.45\columnwidth]{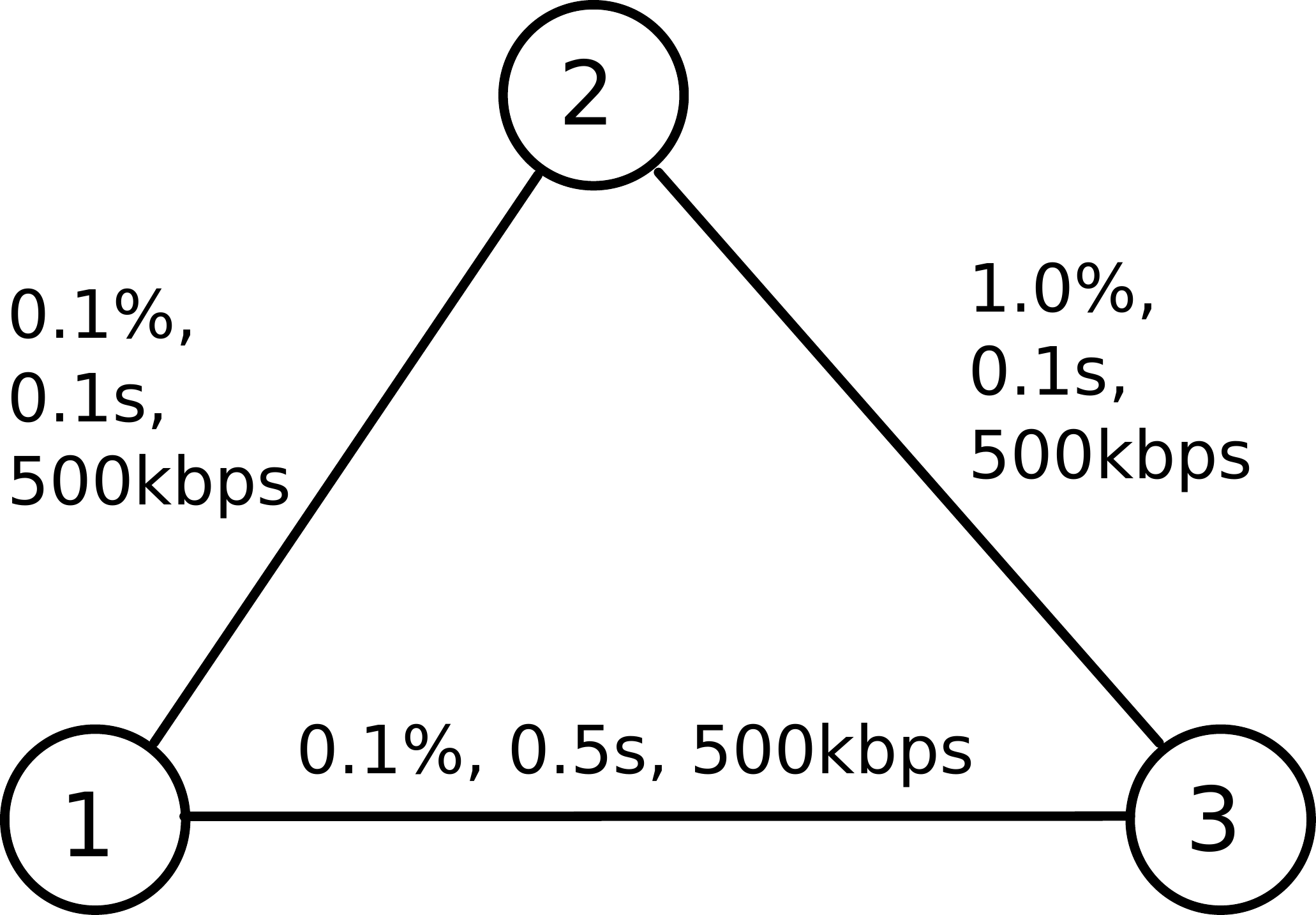}
\caption{Network topology.}
\label{fig:example_topology}
\end{figure}

Table~\ref{tab:example_flow} gives the mission specifications of these three flow on source nodes, destination nodes, maximum loss, maximum delay, minimum required throughput, and allowable duration lengths. The mission also has four temporal requirements: (1) Flow-B and Flow-C should start immediately when the mission begins; (2) Flow-B and Flow-C should end at least 20 seconds apart; (3) Either Flow-B or Flow-C should end before Flow-A starts; (4) the whole mission should take less than 70 seconds.

\begin{table}[ht]\scriptsize
\begin{center}
\begin{tabular}{|c||c|c|c|c|c|c|}
 \hline
 Flow & Source & Sink & Loss & Delay & Throughput & Duration\\
 \hline
 Flow-A & 1 & 2 & 0.5\% & 1s & 200kbps & [30,60]s\\
 Flow-B & 1 & 2 & 3\% & 1s & 360kbps & [30,60]s\\
 Flow-C & 1 & 2 & 3\% & 0.3s & 360kbps & [30,60]s\\
 \hline
\end{tabular}
\caption{Mission specifications of flows.}
\label{tab:example_flow}
\end{center}
\end{table}

\begin{figure}[ht]
\centering
\includegraphics[width=0.98\columnwidth]{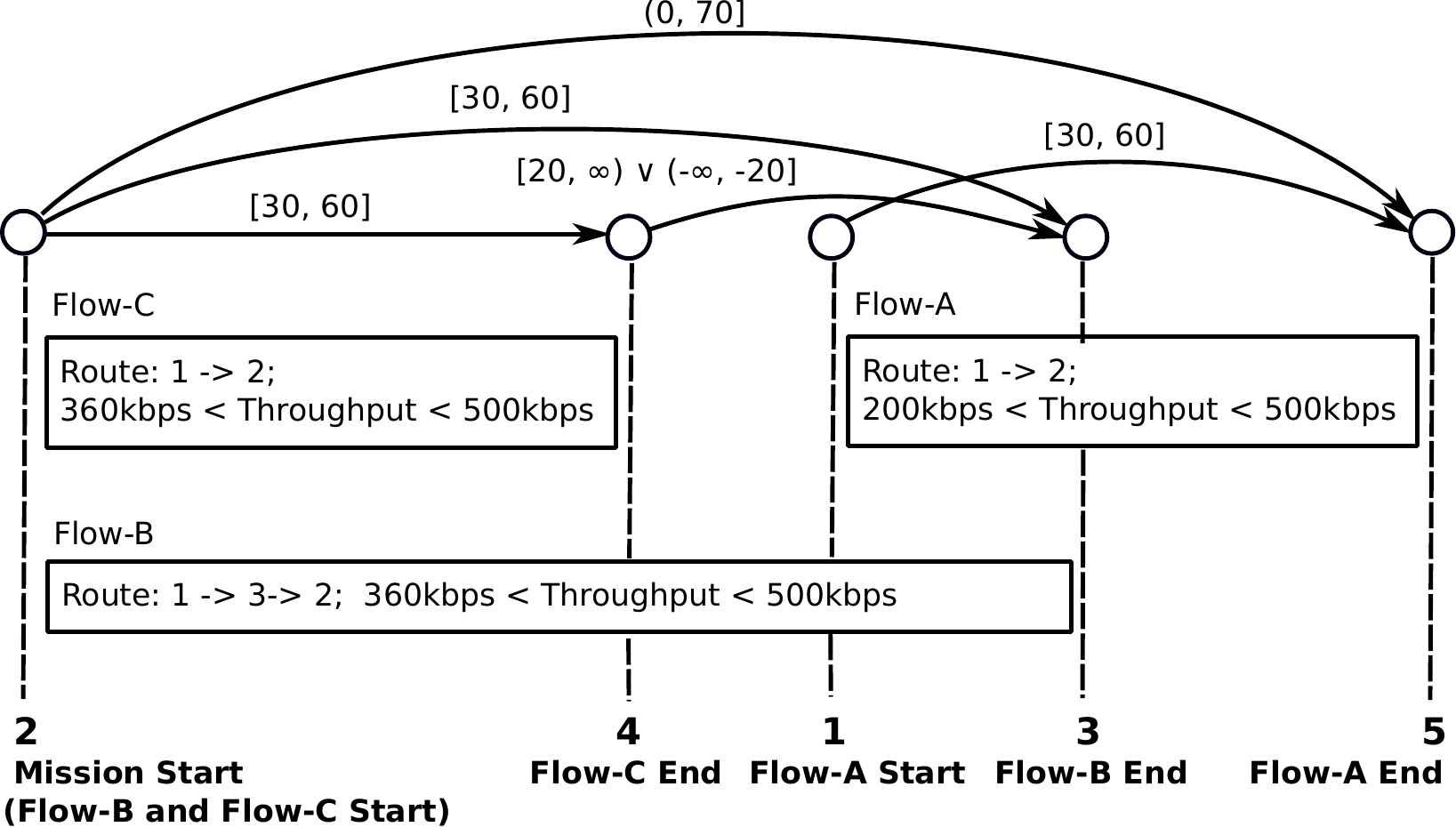}
\caption{Solution of the motivating example.}
\label{fig:example_soln}
\end{figure}

There is only one consistent total order for all the events. This total order and corresponding routes are given in Figure~\ref{fig:example_soln}. We also show the temporal constraints of the example in Figure~\ref{fig:example_soln}. Given the duration requirements on these flows and the temporal requirements (1) - (2), we know Flow-A has to overlap with either Flow-B or Flow-C. Given the loss and delay constraints, we know the only feasible route for Flow-A and Flow-C is Path 1-2, while Flow-B can choose Path 1-2 or Path 1-3-2. However, given the bandwidth capacity constraint, Path 1-2 cannot transfer two flows concurrently, and thus Flow-A and Flow-C cannot overlap. Therefore, Flow-A must start after Flow-C ends and before Flow-B ends, which leads to the total order and routes in Figure~\ref{fig:example_soln}. As we can see, this problem involves routing flows and allocating resources with respect to multiple characteristics, which is hard especially when the mission is over a large network and involves multiple flows \cite{chen2018radmax}. The decision-making problem at a time point is already NP-hard since it contains the traveling salesman problem as a special case. Our motivating example is even more complex since we need to schedule flows and make these decisions over time under disjunctive temporal constraints. We show that our method can efficiently solve this problem by reasoning over the ordering information extracted from the mission specifications and temporal requirements on demand.

\section{Problem Formulation}\label{section:def}
Our ordering problem is given by a tuple $\langle E,\Phi, h \rangle$:
\begin{itemize}
    \item $E$ is a set of $n$ events represented by the natural numbers $\{1,2,..,n\}$.
    \item $\Phi$ is an ordering relation that is a set of clauses, and a disjunct of each clause is a partial order $a \prec b$ that constrains $a \in E$ to precede $b \in E$.
    \item $h:2^\mathcal{L} \rightarrow \langle \{\top, \bot\}, 2^{\mathcal{C}_h} \rangle$ is a user-defined  consistency function that maps a total order $\mathcal{L}$ of $E$ to a Boolean value indicating the consistency of $\mathcal{L}$, and a set of ordering conflicts $\mathcal{C}_h$. 
    Each conflict $c \in \mathcal{C}_h$ is a conjunction of partial orders that are implied by $\mathcal{L}$ and result in inconsistency.
\end{itemize}

A candidate solution of this ordering problem is a total order $\mathcal{L}$ that is a sequence of all the events of $E$. $\mathcal{L}$ is a solution if and only if $\mathcal{L}$ satisfies all the clauses in $\Phi$ and is checked to be consistent by $h$. 

Note that we do not allow events to co-occur; thus, we require a strict ordering such that $\lnot (a \prec b) = (b \prec a)$. 

In general, $h$ can be any evaluable function that returns a consistency indicator and ordering conflicts $\mathcal{C}_h$ from which implicit ordering relations can be extracted. These relations are not included in $\Phi$ but should hold in terms of the consistency defined by $h$. 

Our motivating example can be formulated as follows:
\begin{itemize}
    \item $E = \{1,2,3,4,5\} \equiv$ \{Flow-A Start, Mission Start, Flow-B End, Flow-C End, Flow-A End\}  as shown in Figure~\ref{fig:example_soln}.
    \item $\Phi = \{ \varphi_1, \varphi_2, \varphi_3, \varphi_4 \}$, where $\varphi_1 = 1 \prec 5$, $\varphi_2 = 2 \prec 3$, and $\varphi_3 = 2 \prec 4$ constrain each flow's start to precede its end, and $\varphi_4 = (3 \prec 1) \lor (4 \prec 1)$ captures the temporal requirement (3).
    \item $h$ is able to take as input a total order and determine whether a valid plan that satisfies all problem requirements exits under this total order. If such a valid plan does not exist, ordering conflicts are returned. To determine the consistency of a total order, we use the CP solver introduced in \cite{chen2018radmax} and Incremental Temporal Consistency \cite{shu2005enabling} to check state and temporal consistency of the corresponding plan respectively.
\end{itemize}
Along the search, two other clauses will be extracted from the ordering conflicts returned by $h$: $\varphi_5 = (4 \prec 1) \lor (5 \prec 2)$ is extracted from state inconsistency and means Flow-A and Flow-C cannot overlap; $\varphi_6 = (1 \prec 3) \lor (1 \prec 4)$ is extracted from temporal inconsistency and means Flow-A must start before at least one other flow ends. The method to extract implicit ordering relations is introduced in Section~\ref{section:approach:extraction}.

\section{Approach}\label{section:approach}
In this section, we present the design and implementation of the CDITO algorithm, which incrementally and systematically generates total orders by applying conflict-directed search. CDITO leverages the ideas and methods from total order generation and conflict-directed search in the literature; it uses the ideas of the total order tree from \cite{ono2005constant} and conflict resolution from \cite{williams2007conflict} to efficiently explore this total order tree. We start by reviewing the structure of the total order tree and the search strategy within this tree. Then, we move on to the methods to extract implicit ordering relations and resolve ordering conflicts, and the CDITO algorithm.

\subsection{Total Order Tree}\label{section:approach:tree}
In the total order tree of $E = \{1,2,..,n\}$, nodes are total orders of $E$, and edges are operations of altering partial orders of total orders. This tree is rooted at the root total order $\mathcal{L}_r = (1,2,..,n)$ and constructed by expanding all the children of each total order. This tree expansion uses the notion of total order level that is defined by Ono:

\begin{definition*}{(Level)}\label{def:level}
The level of a total order $\mathcal{L} = (p_1, p_2,..,p_n) \not = \mathcal{L}_r$ is the minimum integer $l$ such that $p_l \not = l$. The level of $\mathcal{L}_r$ is $n$.
\end{definition*}

Consider a total order $\mathcal{L} = (p_1, p_2,..,p_n)$ with level $l$. In order to generate all its children with level $1 \leq i < l$, the method from Ono deletes $p_i = i$ from $\mathcal{L}$ and then inserts it in somewhere $(p_{i+1}, p_{i+2},..,p_n)$ such that the level of this child is $i$. To generate all its children, this process is repeated for every $1 \leq i < l$. The completeness proof of this method is given in \cite{ono2005constant}. In this paper, we formally define this operation of moving an event within a total order as an order move:

\begin{definition*}{(Order Move)}
An order move $(i \rightarrow j)$ deletes $p_i$ from a total order $\mathcal{L} = (p_1, p_2,..,p_n)$ and inserts it right after $p_j$ to obtain a total order $\mathcal{L}'$. This operation is denoted as $\mathcal{L}' = \mathcal{L} \oplus (i \rightarrow j)$.
\label{def:move}
\end{definition*}

Note that each edge in the tree is an order move, and only a subset of order moves are represented in the tree. A feasible order move $(i \rightarrow j)$ from $\mathcal{L}$ with level $l$ should satisfy two conditions: (1) $i < l$: a feasible order move should only move an event that is less than $l$; (2) $i < j$: this move should only right shift an event. Therefore, a total order with level $l$ has $(1 + 2 + .. (l-1)) = l(l-1)/2$ children. The total order tree of $E = \{1,2,3,4\}$ is given as an example in Figure~\ref{fig:tree}.

As the total order tree constrains feasible order moves with respect to the total order's level, an important property of this tree can be obtained as Lemma~\ref{lemma:fixed_in_children}: 
\begin{lemma}\label{lemma:fixed_in_children}
For a total order with level $l$, the partial orders between the events $\{l, l+1,..,n\}$ remain in its descendants.
\end{lemma}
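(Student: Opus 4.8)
The plan is to exhibit the restriction of the total order to $\{l, l+1, \ldots, n\}$ as an invariant that no feasible order move below $\mathcal{L}$ can disturb, and then to push this invariant to every descendant. I read the statement as: for all $a, b \in \{l, \ldots, n\}$, $a$ precedes $b$ in $\mathcal{L}$ exactly when $a$ precedes $b$ in each descendant $\mathcal{L}'$; that is, $\mathcal{L}$ and $\mathcal{L}'$ induce the same ordering on $\{l, \ldots, n\}$.

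First I would unpack the level. If $\mathcal{L} = (p_1, \ldots, p_n)$ has level $l$ then $p_i = i$ for every $i < l$, so a feasible move $(i \to j)$, which requires $i < l$, always relocates the event $i \in \{1, \ldots, l-1\}$ and never a member of the protected set $\{l, \ldots, n\}$. Deleting $p_i$ and reinserting it after $p_j$ leaves the relative order of the other $n-1$ events untouched, and slipping $p_i$ into a gap cannot transpose the two events bounding that gap. Hence a single move preserves the ordering of $\{l, \ldots, n\}$ in the resulting child.

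Next I would iterate this down the tree. The key supporting fact is that a move $(i \to j)$ out of a total order of level $m$ produces a child of level exactly $i$: once $p_i = i$ is removed, the event $p_{i+1} \neq i$ slides into position $i$ while positions $1, \ldots, i-1$ still read $1, \ldots, i-1$. Thus levels strictly decrease along any downward path $l = m_0 > m_1 > m_2 > \cdots$, so at step $t$ the relocated event is smaller than $m_t \le l$ and the child's protected set satisfies $\{m_{t+1}, \ldots, n\} \supseteq \{l, \ldots, n\}$. Applying the single-step claim at each edge shows that the ordering of $\{l, \ldots, n\}$ — a subset of every protected set along the path — survives that edge; chaining these equalities (a routine induction on path length) yields that $\mathcal{L}$ and $\mathcal{L}'$ agree on $\{l, \ldots, n\}$.

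The main obstacle is pinning down the level arithmetic that underlies the whole argument: one must verify that the child's level is exactly $i$, so that the relocated event at each step is strictly below the current level and hence below $l$, and that levels strictly decrease, which is what guarantees $\{l, \ldots, n\}$ stays inside every protected set encountered during the descent. Once these facts are secured, preservation of the ordering on $\{l, \ldots, n\}$ follows immediately, since relocating an event outside that set can never reorder its members.
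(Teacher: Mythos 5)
Your proposal is correct and follows essentially the same argument as the paper: feasible moves only relocate events below the current level, so the ordering on $\{l,\ldots,n\}$ is untouched in each child, and since levels strictly decrease down the tree the invariant propagates to all descendants by induction. You simply make explicit the level arithmetic (child level equals $i$, protected sets grow along the path) that the paper's two-sentence proof leaves implicit.
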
 
\begin{proof}
Since child generation does not allow moving events that are larger than the parent's level, $\{l, l+1,..,n\}$ are not moved, and their partial orders remain in this total order's children. Given that all descendants' levels are less than $l$, the partial orders between the events $\{l, l+1,..,n\}$ remain in the descendants of this total order as well.
\end{proof}



\begin{figure}[ht]
\centering
\includegraphics[width=0.98\columnwidth]{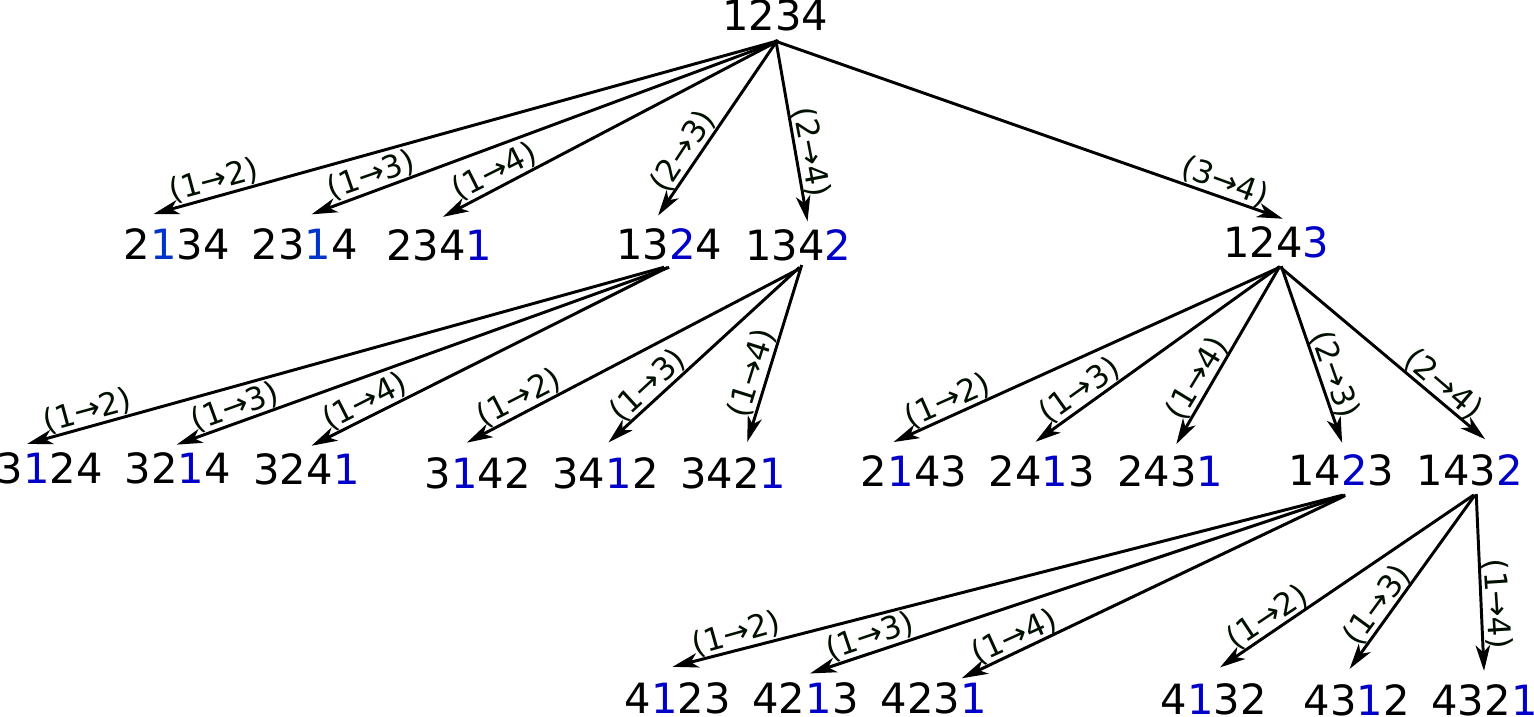}
\caption{Total order tree of $E = \{1,2,3,4\}$. The nodes are total orders; the edges are order moves; the levels of total orders are in blue.}
\label{fig:tree}
\end{figure}

\subsection{Total Order Search}
Now we introduce the search strategy within the total order tree, which basically follows these three rules: (1) children are visited before siblings, which means our strategy is a depth-first search. For example, in Figure~\ref{fig:tree}, after $1324$ is visited, the search will visit its child $3124$ instead of its sibling $1342$; (2) the algorithm backtracks when children are exhausted or there is no child. For example, after the search visits $3421$, it will backtrack to $1342$; (3) from a total order, the group of children with the lowest level $i$ is generated first, and within a group, children are generated by right shifting $i$ until the right end. For example, the children of $1243$ are visited in the order of $2\underline{1}43$, $24\underline{1}3$, $243\underline{1}$, $14\underline{2}3$, and $143\underline{2}$, and the corresponding order moves are $(1 \rightarrow 2)$, $(1 \rightarrow 3)$, $(1 \rightarrow 4)$, $(2 \rightarrow 3)$, and $(2 \rightarrow 4)$.

\begin{algorithm}[ht]\small
\caption{Total Order Search}
\label{alg:tos}
\KwIn{$n$}
\KwOut{$\mathcal{O}$}
$\mathcal{L} \gets (1,2,...,n)$ \;
$\mathcal{O} \gets [\mathcal{L}]$ \;
$\mathcal{P} \gets [(1,1,n)]$ \; \label{line:tos:init_stack}
\While{$\mathcal{P} \not = \{\}$}{
$( i,j,l ) \gets \mathcal{P}[1] $ \;\label{line:tos:read_stack}
\eIf{$j \not = n$ \label{line:tos:next-move-init-start}}
    {$(i^{\dagger},j^{\dagger}) \gets (i,j + 1)$}
    {$(i^{\dagger},j^{\dagger}) \gets (i+1,i+2)$ \label{line:tos:next-move-init-end}}
\eIf{$i^{\dagger} < l$\label{line:tos:maxl}}
    {
    $\mathcal{P}[1] \gets (i^{\dagger}, j^{\dagger}, l )$\; \label{line:tos:update_p_stack}
    $\mathcal{L} \gets \mathcal{L} \oplus (i^{\dagger} \rightarrow j^{\dagger})$ \;
    push $\mathcal{L}$ to $\mathcal{O}$ \;
    push $(1, 1, i^{\dagger})$ to $\mathcal{P}$ \; \label{line:tos:update_c_stack}}
    {pop $\mathcal{P}$\; \label{line:tos:popstack}
    \If{$\mathcal{P} \not = \{\}$}{
        $(i',j',l') \gets \mathcal{P}[1]$ \;
        $\mathcal{L} \gets \mathcal{L} \oplus (j' \rightarrow (i' - 1))$\;\label{line:tos:backinsert}}}}
\KwRet{$\mathcal{O}$}
\end{algorithm}

Total Order Search (Algorithm~\ref{alg:tos}) takes as input the number of events $n = |E|$ and outputs all the total orders $\mathcal{O}$ by following the aforementioned three rules. In Total Order Search, we use the search status $(i,j,l)$ of a total order $\mathcal{L}$ to compute the next move, where $(i \rightarrow j)$ records the last applied order move from $\mathcal{L}$, and $l$ is the level of $\mathcal{L}$. We use a stack $\mathcal{P}$ to store all the search statuses from the root total order to the current total order, and we initialize it with $[(1,1,n)]$ (Line~\ref{line:tos:init_stack}). An example in Figure~\ref{fig:tree} is that, after the search visits $2431$ and backtracks to $1243$, the search status of $1243$ and $1234$ are $(1,4,3)$ and $(3,4,4)$ respectively. Thus, we have $P = [(1,4,3),(3,4,4)]$. Note that $\mathcal{P}[1]$ is the status of the current total order. To generate a child, our algorithm reads $(i,j,l)$ from $\mathcal{P}$ (Line~\ref{line:tos:read_stack}) and computes the next order move  
$(i^{\dagger} \rightarrow j^{\dagger})$ (Lines~\ref{line:tos:next-move-init-start}-\ref{line:tos:next-move-init-end}). If $i^{\dagger} < l$, we apply this order move to obtain a child (Lines \ref{line:tos:update_p_stack}-\ref{line:tos:update_c_stack}), otherwise the search backtracks (Lines \ref{line:tos:popstack}-\ref{line:tos:backinsert}). 

As introduced in Section~\ref{section:approach:tree}, from a total order $\mathcal{L}$ with level $l$, there are $l(l-1)/2$ feasible order moves $(i \rightarrow j)$. Lines~\ref{line:tos:next-move-init-start}-\ref{line:tos:next-move-init-end} enforce these order moves to be sorted in the ascending order of $(ni+j)$, which follows rule (3).

Our search order is different from the breadth-first search order used in \cite{ono2005constant}. We explore the tree in the same order as ITO \cite{wang2015}. However, while ITO uses a queue to store the next several order moves, we use a stack of search statuses to record the explored search space. This will be seen to be critical to manipulating the search order and pruning the search space.

\subsection{Extracting Implicit Ordering Relations}\label{section:approach:extraction}
In this section, we introduce the methods to find ordering conflicts from grounded consistency check, such as state and temporal consistency. Then, we extract implicit ordering relations from these conflicts. 

In real-world problems, the requirements are not restricted to ordering relations. A total order that satisfies the ordering relation $\Phi$ may fail to achieve state or temporal consistency. Therefore, the user-defined consistency function $h$ is used to check the consistency of a total order $\mathcal{L}$ to discover ordering conflicts from which implicit ordering relations can be extracted. It is computationally expensive to enumerate these implicit relations and include them in the ordering relation $\Phi$ in the beginning. Instead, these implicit relations can be found by calling $h$ on demand and then added $\Phi$ . 

We take state and temporal inconsistency as examples to show how to extract ordering conflicts and implicit ordering relations with respect to user-defined consistency.

\subsubsection{State Inconsistency}
We first introduce how to extract ordering and implicit ordering relations from state inconsistency, such as exceeding resource capacities. In our motivating example, Flow-A and Flow-C cannot be transferred concurrently because the only available path that satisfies their loss and delay constraints is Path 1-2. However, the bandwidth capacity of Link 1-2 is not enough to transfer both of them together. Therefore, the concurrency of these two flows leads to an ordering conflict. As Flow-A starts at $1$ and ends at $5$, and Flow-C starts at $2$ and ends at $4$, this ordering conflict can be represented as $c_5 = (1 \prec 4) \land (2 \prec 5)$. This conflict compactly captures all the combinations of this concurrency: $1254$, $1245$, $2154$, and $2145$, which means the start events of these flows happen before the end events. The implicit ordering relation discovered from this ordering conflict should prevent Flow-A and Flow-C from being transferred concurrently. Therefore, this relation is the negation of $c_5$ that is $\varphi_5 = \lnot c_5 = \lnot ((1 \prec 4) \land (2 \prec 5)) = (4 \prec 1) \lor (5 \prec 2)$.

We represent the ordering conflict of the concurrency between two tasks as a conjunction of two partial orders: $R^s_{ij} = (x_i^\vdash \prec x_j^\dashv) \land (x_j^\vdash \prec x_i^\dashv)$, where $i$ and $j$ are the indices of these two tasks; $x_i^\vdash$ and $x_j^\vdash$ are the start events; $x_i^\dashv$ and $x_j^\dashv$ are the end events. When multiple tasks are concurrent, the ordering conflict $c^s$ is as follows: 
\begin{equation}\small
\label{eq:inconsistent_state}
    c^{s} = \underset{i,j}{\land}R^{s}_{ij} = \underset{i,j}{\land} (x_i^\vdash \prec x_j^\dashv).
\end{equation}
where each $R^{s}_{ij}$ represents the concurrency of two tasks. 

Assume that $m$ tasks are concurrent, Equation~\ref{eq:inconsistent_state} can capture this concurrency with a conjunction of $m(m-1)$ partial orders. In every total order featuring this concurrency, if we halve the involved events into two groups with respect to this total order, all the start events are in the first group, and all the end events are in the second group. Thus, this concurrency can be described by specifying all the precedence relations between start events and end events while it can happen in at least $(m!)^2$ total orders.

Given the ordering conflict $c^{s}$ representing the concurrency of multiple tasks, we extract an implicit ordering relation $\varphi^{s}$. As $c^{s}$ should not hold in any consistent total order, all the partial orders of a consistent total order must entail its negation $\lnot c^{s}$, and thus $\varphi^{s} = \lnot c^{s}$ is a clause as follows:
\begin{equation}
\small
\label{eq:implicit_state}
    \varphi^{s} = \lnot \underset{i,j}{\land} (x_i^\vdash \prec x_j^\dashv) = \underset{i,j}{\lor} \lnot (x_i^\vdash \prec x_j^\dashv) = \underset{i,j}{\lor} (x_j^\dashv \prec x_i^\vdash).
\end{equation}
The intuitive explanation of $\varphi^s$ is that some task should end before others in a consistent total order.

\subsubsection{Temporal Inconsistency}
Ordering conflicts and implicit ordering relations can also be extracted from temporal inconsistency. A total order may violate temporal constraints such as temporal requirements (1) - (4) in our motivating example, even though we have extracted partial orders from these temporal requirements. For example, if we order the start of Flow-A to be after the ends of the other two flows, the mission horizon will exceed 70 seconds, which violates temporal requirement (4). As each flow takes at least 30 seconds, and Flow-B and Flow-C should end at least 20 seconds apart, the mission will take at least 80 seconds under this ordering, which exceeds 70 seconds. As Flow-A starts at 1, and Flow-B and Flow-C ends at 3 and 4 respectively, the ordering conflict of this temporal inconsistency is $c_6 = (3 \prec 1) \land (4 \prec 1)$. Then, the implicit ordering relation discovered from $c_6$ is the clause $\varphi_6 = \lnot ((3 \prec 1) \land (4 \prec 1)) = (1 \prec 3) \lor (1 \prec 4)$, which means Flow-A should begin before one other flow ends such that the whole mission takes less than 70 seconds.

We formally model the temporal requirements as a Temporal Constraint Network \cite{dechter1991temporal}. A total order on the events in the network is equivalent to imposing temporal constraints $(0,\infty)$ on every pair of events whose precedence relation is specified by this total order. As these imposed constraints tighten the network, temporal inconsistency may be introduced. In the distance graph form of this network, a temporal consistency checking algorithm is able to detect negative cycles that are composed of inconsistent temporal constraints. Given a negative cycle, we use a partial order $R^t_i = x_i^- \prec x_i^+$ to represent every temporal constraint that is added because of total ordering and involved in the cycle. The ordering conflict $c^t$ is used to represent this negative cycle as follows: 
\begin{equation}\small
    c^{t} = \underset{i}{\land}R^t_i = \underset{i}{\land}(x_i^- \prec x_i^+).
\end{equation}

Similar to extracting an implicit ordering relation from inconsistent concurrency, we obtain an implicit ordering relation $\varphi^t$ from $c^t$ by using $\varphi^t = \lnot c^t$ as follows:
\begin{equation}
\small
\label{eq:implicit_time}
    \varphi^{t} = \lnot \underset{i}{\land}(x_i^- \prec x_i^+) = \underset{i}{\lor}\lnot(x_i^- \prec x_i^+) = \underset{i}{\lor}(x_i^+ \prec x_i^-).
\end{equation}
The intuitive explanation of $\varphi^t$ is that some temporal constraint $(0,\infty)$ involved in the negative cycle should be removed to break this cycle.

\subsection{Resolving Ordering Conflicts}
In this section, we introduce the method to resolve ordering conflicts by finding the first order move that is able to jump over inconsistent total orders with respect to these conflicts.

Recall that $\Phi$ is a set of clauses, and thus we can check every clause against the current total order and determine all the unsatisfied clauses. An ordering conflict that is a conjunction of partial orders unsatisfying a clause can be obtained by negating this unsatisfied clause.

To resolve an ordering conflict, the search needs to move to a total order that negates at least one partial order in this ordering conflict. The first order move to achieve this negation is called the constituent kernel of this conflict. The kernel or combined kernel is the last order move of all the constituent kernels, which leads to a subtree where a consistent total order possibly exists with respect to all the ordering conflicts.

\begin{figure*}[ht]
\centering
\includegraphics[width=1.8\columnwidth]{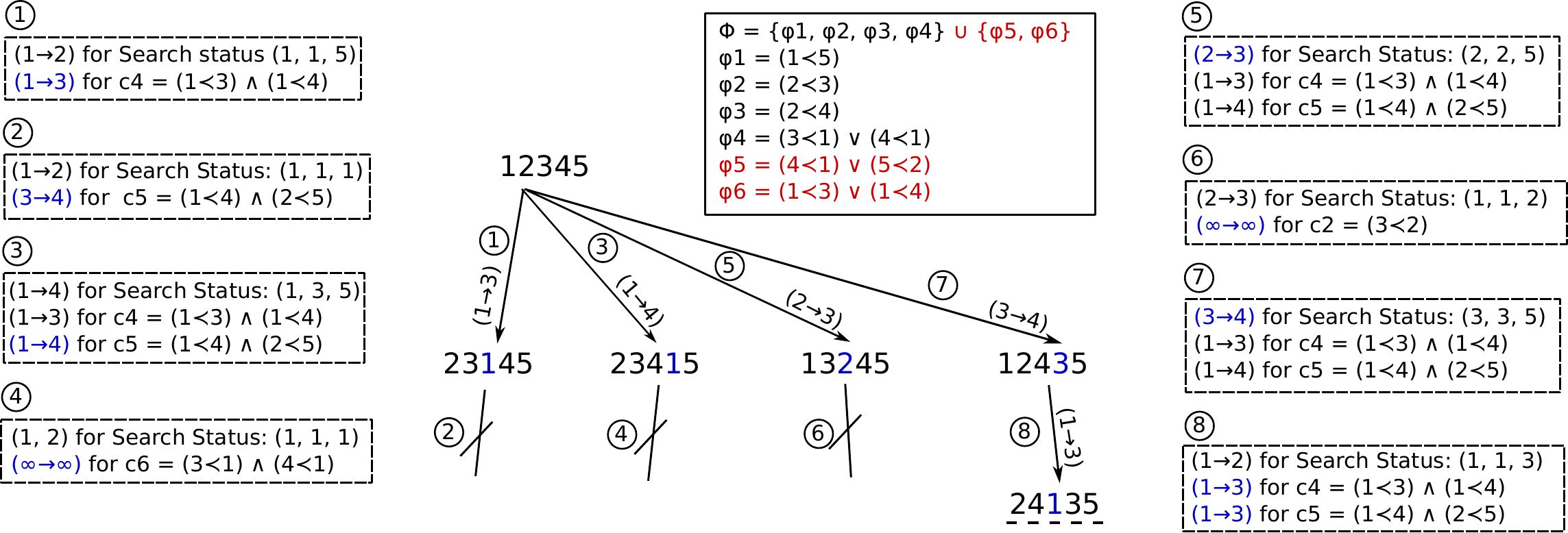}
\caption{Solving the motivating example by using CDITO in eight iterations. The procedures of computing combined kernels for these iterations are given in the dotted boxes; the combined kernels are in blue; the implicit ordering relations $\varphi_5$ and $\varphi_6$ are in red and discovered by the second and fourth iterations respectively.}
\label{fig:cdito}
\end{figure*}

\subsubsection{Constituent Kernel}
The constituent kernel of an ordering conflict from a total order is the first order move that leads to a total order negating at least one partial order in this ordering conflict. From another perspective, any order move that is before the constituent kernel would lead to an inconsistent total order. First consider the root total order $12345$ with the search status $(1,1,5)$ in our motivating example. Given the ordering relation $\Phi$, this total order violates the clause $\varphi_4 = (3 \prec 1) \lor (4 \prec 1)$, and the corresponding ordering conflict is its negation $c_4 = (1 \prec 3) \land (1 \prec 4)$. In order to resolve $c_4$, the next generated total order should negate $(1 \prec 3)$ or $(1 \prec 4)$. Recall that an order move $(i \rightarrow j)$ with smaller $(ni+j)$ is taken first with respect to our search strategy, and thus the search will first try $(1 \rightarrow 2)$. However, $(1 \rightarrow 2)$ will lead to a total order $21345$ and the ordering conflict $c_4$ remains. In addition, $21345$ has no child and it is impossible to resolve $c_4$ in its descendants. Therefore, we can safely skip $(1 \rightarrow 2)$ and apply $(1 \rightarrow 3)$ to generate $23145$ that resolves $c_4$. We call $(1 \rightarrow 3)$ the constituent kernel of $c_4$ from $12345$. For the sake of completeness, the constituent kernel should be the first order move in the search to resolve the ordering conflict even if the subsequent order moves $(1 \rightarrow 4)$ and $(1 \rightarrow 5)$ can also resolve $c_4$. We show an example of breaking completeness by taking order moves coming after $(1 \rightarrow 3)$. Assuming that the only consistent solution is $23415$ by taking $(1 \rightarrow 4)$, if we choose an order move that comes after $(1 \rightarrow 3)$ such as $(1 \rightarrow 5)$, we will skip the solution $23145$. By taking $(1 \rightarrow 3)$, we still have a chance to reach $23145$ by taking $(1 \rightarrow 4)$ as the next order move.

Note that the computation of constituent kernels depends on the current total order. Recall that an order move $(i \rightarrow j)$ means moving the $i^{\text{th}}$ event after the $j^{\text{th}}$ event in a total order instead of moving event $i$ after event $j$. As the constituent kernel is an order move that negates some partial orders of a conflict in the current total order, the constituent kernel varies when the current total order is different. In our motivating example, both $12345$ and $12435$ violate $\varphi_5 = (4 \prec 1) \lor (5 \prec 2)$ and share the conflict $c_5 = (1 \prec 4) \land (2 \prec 5)$. However, their constituent kernels are $(1 \rightarrow 4)$ and $(1 \rightarrow 3)$ respectively since $4$ is the fourth event in $12345$ but the third in $12435$. 

In addition to the current total order, the computation of a constituent kernel also considers the total order's level, which constrains the maximum event to move. Given a level $l$, we summarize constituent kernels to three types: (1) when resolving a conflict requires moving a event less than $l$, the order move will lead to its child. The above example of resolving the conflict $c_4$ in $12345$ is of this type; (2) when resolving a conflict requires moving the event $l$, the order move will lead straight to the current total order's sibling. For example, as $23145$ has the conflict $c_5 = (1 \prec 4) \land (2 \prec 5)$, we need to apply $(3 \rightarrow 4)$ to negate $(1 \prec 4)$, which leads to its sibling $23415$. 
(3) when resolving a conflict requires moving a event larger than $l$, no valid constituent kernel exists, and we denote it as $(\infty, \infty)$. For example, resolving the conflict $c_2 = (3 \prec 2)$ in $13245$ requires moving $3$. However, $3$ cannot be moved in the subtree rooted at $13245$ because the level of $13245$ is $2$.

Formally, consider a total order $\mathcal{L}=(p_1,p_2,...,p_n)$ with level $l$ and a corresponding ordering conflict $c_r = \underset{s \in S(r)}{\land} q_{rs}$, where $S(r)$ is the set of indices of the partial orders in $c_r$, and $q_{rs}$ is the $s^{\text{th}}$ partial order of $c_r$. We iterate over every $q_{rs} = (p_{i'} \prec p_{j'})$ for $s \in S(r)$ and then determine the first order move $(i^{\dagger}_r \rightarrow j^{\dagger}_r)$ in the search to negate some $q_{rs}$. This order move is the constituent kernel of the ordering conflict $c_r$ from $\mathcal{L}$, which is computed as follows:
\begin{equation}
\small
\label{eq:constituent_kernel}
    (i^{\dagger}_r \rightarrow j^{\dagger}_r) = \underset{(i' \rightarrow j') \in \Delta_r}{\text{argmin}} (ni'+j'),
\end{equation}
where $\Delta_r= \{(i' \rightarrow j') \, | \, ((p_{i'} \prec p_{j'}) \in c_r)  \land (p_{i'} \leq l)\} \cup \{(\infty \rightarrow \infty)\}$, and $(i' \rightarrow j')$ is the first subsequent order move to negate $q_{rs} = (p_{i'} \prec p_{j'})$. Recall that the order move with smaller $(ni'+j')$ is applied first. Therefore, the constituent kernel of $c_r$ is the order move $(i' \rightarrow j') \in \Delta_r$ with the smallest $(ni'+j')$, which is the first to negate at least one $q_{rs} = (p_{i'} \prec p_{j'})$. Note that $\Delta_r$ removes all the infeasible order moves that require moving the events larger than $l$ by forcing $(p_{i'} \leq l)$. If all the moves are infeasible, $\Delta_r = \{(\infty \rightarrow \infty)\}$ and $(i^{\dagger}_r \rightarrow j^{\dagger}_r) = (\infty \rightarrow \infty)$. Therefore, Equation~\ref{eq:constituent_kernel} is able to compute all the types of constituent kernels we just introduced.

\subsubsection{Combined Kernel}
Now we introduce how to combine multiple constituent kernels to compute a combined kernel for all the ordering conflicts.

As any order move before a constituent kernel is inconsistent, the combined kernel is the first order move that leads to a subtree where a consistent total order possibly exists. For example, the total order $12345$ violates $\varphi_4 = (3 \prec 1) \lor (4 \prec 1)$ and $\varphi_5 = (4 \prec 1) \lor (5 \prec 2)$, resulting in the ordering conflicts $c_4 = (1 \prec 3) \land (1 \prec 4)$ and $c_5 = (1 \prec 4) \land (2 \prec 5)$. The constituent kernels for $c_4$ and $c_5$ are $(1 \rightarrow 3)$ and $(1 \rightarrow 4)$ respectively. The combined kernel of $c_4$ and $c_5$ should be $(1 \rightarrow 4)$, which is the first subsequent move to resolve both conflicts. If we apply a move before this constituent kernel such as $(1 \rightarrow 3)$ and then obtain $23145$, we will notice that $23145$ still has the ordering conflict $c_5$.

Note that the computation of combined kernels also considers the last order move $(i \rightarrow j)$ that is stored in the search status $(i,j,l)$ along with the level $l$. Given a search status, any order move before the last order move $(i \rightarrow j)$ has been taken or skipped and thus is unavailable. Therefore, the constituent kernels of the ordering conflicts are probably before $(i \rightarrow j)$ and cannot be taken. In this case, we simply follow the standard total order search as shown in Lines~\ref{line:tos:next-move-init-start}-\ref{line:tos:next-move-init-end} in Algorithm~\ref{alg:tos}. For example, the total order $12345$ associated with search status $(2,2,5)$ has the conflicts $c_4 = (1 \prec 3) \land (1 \prec 4)$ and $c_5 = (1 \prec 4) \land (2 \prec 5)$, whose constituent kernels are $(1 \rightarrow 3)$ and $(1 \rightarrow 4)$ respectively. Since the search status is $(2,2,5)$, any order move $(i,j)$ with $(5i+j) < (5 \times 2 + 2) = 12$ has been checked to be inconsistent. Therefore, the next order move is $(2 \rightarrow 3)$, which follows the standard total order search. More generally, we can see the search status $(i,j,l)$ as a special ordering conflict such that a special constituent kernel $(i \rightarrow j+1)$ or $(i+1 \rightarrow i+2)$ is generated, before which all the order moves have been checked to be inconsistent by the previous search.

In addition to the aforementioned three types of constituent kernels, considering the search status introduces the fourth kernel type $(l \rightarrow l+1)$ for combined kernels. The kernel of this type is obtained by substituting $i = (l-1)$ into $(i+1 \rightarrow i+2)$, which means a total order's feasible children whose levels are less than $l$ have been exhausted. The method to handle this kernel along with other combined kernels is introduced in Section~\ref{section:approach:cdito}.

Consider a total order $\mathcal{L}$ with search status $(i,j,l)$ and ordering conflicts $\mathcal{C}$ = $\{ c_r = \lnot \varphi_r\, | \, \mathcal{L}  \text{ violates } \varphi_r \in \Phi  \}$, we first compute the constituent kernel $(i^{\dagger}_r \rightarrow j^{\dagger}_r)$ for every ordering conflict $c_r \in \mathcal{C}$. Then, the combined kernel $(i^{\dagger} \rightarrow j^{\dagger})$ of
$\mathcal{C}$ is obtained as follows:
\begin{equation}
\small
\label{eq:combined_kernel}
    (i^{\dagger} \rightarrow j^{\dagger}) = \underset{(i^{\dagger}_r \rightarrow j^{\dagger}_r) \in \Delta^\dagger}{\text{argmax}}(ni^{\dagger}_r + j^{\dagger}_r),
\end{equation}
where $\Delta^\dagger = \{(i^{\dagger}_r \rightarrow j^{\dagger}_r) \, | \, (i^{\dagger}_r \rightarrow j^{\dagger}_r)$ is the constituent kernel of $c_r \in \mathcal{C}\} \cup \{(i_0 \rightarrow j_0)\}$. Note that $(i_0 \rightarrow j_0)$ is $(i \rightarrow j+1)$ or $(i+1 \rightarrow i+2)$, which is the special constituent kernel considering the search status. As any order move before a constituent kernel leads to inconsistent total orders, the combined kernel obtained by Equation~\ref{eq:combined_kernel} is an order move before which all the order moves are inconsistent with respect to the search status and all the ordering conflicts.

Now we introduce \textsc{NextMove} (Algorithm~\ref{alg:next-move}) that computes the combined kernel $(i^{\dagger},j^{\dagger})$ for a set of ordering conflicts $\mathcal{C}$ of a total order $\mathcal{L}$ with search status $(i,j,l)$. The inner loop (Lines~\ref{line:next-move:inner-init}-\ref{line:next-move:inner-end}) computes the constituent kernel $(i^{\dagger}_r \rightarrow j^{\dagger}_r)$ for each ordering conflict $c_{r} = \underset{s \in S(r)}{\land} q_{rs}$ by following Equation~\ref{eq:constituent_kernel}, and the outer loop computes the combined kernel $(i^{\dagger} \rightarrow j^{\dagger})$ with respect to all the constituent kernels by following Equation~\ref{eq:combined_kernel}. In the inner loop, the algorithm begins with the infeasible order move (Line~\ref{line:next-move:inner-init}) and updates the kernel with a nearer order move that negates a partial order (Lines~\ref{line:next-move:inner-update-1}-\ref{line:next-move:inner-update-2}). To accelerate this procedure, Line~\ref{line:next-move:inner-break} breaks the inner loop when $(i^{\dagger}_r \rightarrow j^{\dagger}_r)$ would be entering an inconsistent subtree with respect to the incumbent of the combined kernel by following Equation~\ref{eq:combined_kernel}. In the outer loop, Lines~\ref{line:next-move:init-start}-\ref{line:next-move:init-end} initialize $(i^{\dagger} \rightarrow j^{\dagger})$ with the last order move $(i,j)$. The algorithm updates $(i^{\dagger} \rightarrow j^{\dagger})$ when a constituent kernel that jumps further is found (Line~\ref{line:next-move:outer-update}). To accelerate this procedure, we return the infeasible order move $(\infty,\infty)$ when an ordering conflict is unsolvable (Line~\ref{line:next-move-unsolvable}).

\begin{algorithm}[t]\small
\caption{\textsc{NextMove}} 
\label{alg:next-move}
\KwIn{$\langle \mathcal{L}, \mathcal{C}, i, j, l\rangle$ \tcp{$\mathcal{C} = \{ c_r =  \underset{s \in S(r)}{\land} q_{rs} \,|\, r \in R$\}}}
\KwOut{$(i^{\dagger}, j^{\dagger})$}
    \eIf{$j \not = n$ \label{line:next-move:init-start}}
    {$(i^{\dagger},j^{\dagger}) \gets (i,j + 1)$}
    {$(i^{\dagger},j^{\dagger}) \gets (i+1,i+2)$ \label{line:next-move:init-end}}
    \For{$r \in R$}{
        $(i^{\dagger}_{r},j^{\dagger}_{r}) \gets (\infty,\infty)$\; \label{line:next-move:inner-init}
        \For{$s \in S(r)$}{
            $(p_{i'} \prec p_{j'}) \gets q_{rs}$ in $\mathcal{L}$ \;
            \If{$(ni'+ j') \leq (ni^{\dagger} + j^{\dagger})$}
            {$(i^{\dagger}_r,j^{\dagger}_r) \gets (1, 1)$ and break \label{line:next-move:inner-break}}
            \If{$p_{i}' \leq l \text{ and } (ni' + j') < (ni^{\dagger}_{r} + j^{\dagger}_{r})$ \label{line:next-move:inner-update-1}}
            {$(i^{\dagger}_{r},j^{\dagger}_{r}) \gets (i',j')$}}\label{line:next-move:inner-update-2}\label{line:next-move:inner-end}
        \lIf{$p_{i^{\dagger}_r} > l$}{\KwRet{$(\infty,\infty)$}} \label{line:next-move-unsolvable} 
        \lIf{$(ni^{\dagger} + j^{\dagger}) < (ni^{\dagger}_{r} + j^{\dagger}_{r})$}
            {$(i^{\dagger},j^{\dagger}) \gets (i^{\dagger}_r,j^{\dagger}_r)$}}\label{line:next-move:outer-update}       
    \KwRet{$(i^{\dagger}, j^{\dagger})$}
\end{algorithm}

\subsection{CDITO Algorithm}\label{section:approach:cdito}
In this section, we present the CDITO algorithm (Algorithm~\ref{alg:cdito}). CDITO follows the same search strategy of Total Order Search (Algorithm~\ref{alg:tos}) within a total order tree and uses combined kernels to jump over inconsistent total orders (Algorithm~\ref{alg:next-move}).

Algorithm~\ref{alg:cdito} takes as input a total order $\mathcal{L}$, a stack of search statuses $\mathcal{P}$, an ordering relation $\Phi$, and a consistency function $h$. CDITO outputs either an empty set (Line~\ref{line:cdito:wipeout}) or a total order that satisfies $\Phi$ and is checked to be consistent by $h$ (Lines~\ref{line:cdito:order-cons}-\ref{line:cdito:return-l}).

\begin{algorithm}[t]
\caption{CDITO}\small
\label{alg:cdito}
\KwIn{$\langle \mathcal{L}, \mathcal{P}, \Phi, h \rangle$}
\KwOut{$\mathcal{L}$ or \{\} 
}
\While{$\Phi \text{ is consistent and } \mathcal{P} \not = \{\}$}{
    \If{$\mathcal{L} \text{ satisfies } \Phi$\label{line:cdito:order-cons}}
            {$\langle consistent?, \mathcal{C}_h \rangle \gets h(\mathcal{L})$\;
            \eIf{$ consistent? = \top$ \label{line:cdito:h-cons}}
                {\KwRet{$\mathcal{L}$} \label{line:cdito:return-l}}
                {$\Phi \gets \Phi \cup \{ \varphi_r = \lnot c_r \, | \, c_r \in \mathcal{C}_h  \}$\label{line:cdito:implicit}}}
    $( i,j,l ) \gets \mathcal{P}[1] $ \;    
    $\mathcal{C} \gets \{ c_r = \lnot \varphi_r\, | \, \mathcal{L} \text{ violates } \varphi_r \in \Phi  \}$ \; \label{line:cdito:conflicts}
    $(i^{\dagger},j^{\dagger}) \gets \textsc{NextMove}(\mathcal{L}, \mathcal{C}, i, j, l)$ \; \label{line:cdito:kernel}
    \eIf{$i^{\dagger} < l$ \label{line:cdito:handle-move-begin}}
        {$\mathcal{L} \gets \mathcal{L} \oplus (i^{\dagger} \rightarrow j^{\dagger})$\;\label{line:cdito:child_generation}
            $\mathcal{P}[1] \gets ( i^{\dagger}, j^{\dagger}, l )$\; 
            push $(1, 1, i^{\dagger})$ to $\mathcal{P}$ \; \label{line:cdito:child_status}}
        {pop $\mathcal{P}$\; \label{line:cdito:popstack}
        \If{$\mathcal{P} \not = \{\}$}{
            $(i',j',l') \gets \mathcal{P}[1] $ \;
            $\mathcal{L} \gets \mathcal{L} \oplus (j' \rightarrow (i' - 1))$\;\label{line:cdito:backinsert}
            \lIf{$l < i^{\dagger} < \infty$}
                {$\mathcal{P}[1] \gets (i',j^{\dagger}-1,l')$
                \label{line:cdito:sibling-move}}
            \lIf{$i^{\dagger} = \infty$}
                {$\mathcal{P}[1] \gets ((i' + 1),(i' + 1),l' )$ \label{line:cdito:prune-cluster}}
             \label{line:cdito:handle-move-end}}}}
\KwRet{\{\}}\; \label{line:cdito:wipeout}
\end{algorithm}

The consistency function $h$ takes as input a total order and determines grounded consistency other than $\Phi$. Implicit ordering relations can be extracted by negating the ordering conflicts $\mathcal{C}_h$ found by $h$ on demand, such as Equation~\ref{eq:implicit_state} and Equation~\ref{eq:implicit_time}. Then, these relations are added into $\Phi$ to avoid generating total orders with similar inconsistency (Line~\ref{line:cdito:implicit}). As shown in Figure~\ref{fig:cdito}, $h$ is invoked two times in the total eight iterations: the second iteration extracts an implicit ordering relation $\varphi_5$ from inconsistent concurrency, and the fourth iteration extracts $\varphi_6$ from temporal inconsistency.

In order to compute the combined kernel $(i^{\dagger} \rightarrow j^{\dagger})$, the algorithm collects all the ordering conflicts $\mathcal{C}$ (Line~\ref{line:cdito:conflicts}) and inputs $\mathcal{C}$ into \textsc{NextMove} (Algorithm \ref{alg:next-move}) along with the current total order $\mathcal{L}$ and the search status $(i,j,l)$ (Line~\ref{line:cdito:kernel}).

Recall that there are four types of combined kernel $(i^{\dagger} \rightarrow j^{\dagger})$, and they are handled differently (Lines \ref{line:cdito:handle-move-begin}-\ref{line:cdito:handle-move-end}): (1) $i^{\dagger} < l$; (2) $ l < i^{\dagger} < \infty$;(3) $i^{\dagger} = \infty$; and (4) $i^{\dagger} = l$. As shown in Figure~\ref{fig:cdito}, the motivating example is solved by 8 iterations: the second iteration is of type (2), the fourth and the sixth iterations are of type (3), and the other iterations are of type (1). When a kernel is of type (1), we directly take it to generate a child (Lines~\ref{line:cdito:child_generation}-\ref{line:cdito:child_status}). For the other types, CDITO backtracks (Lines \ref{line:cdito:popstack}-\ref{line:cdito:backinsert}) but updates the parent's search status in different ways. When a kernel is of type (4), the search status is not updated, which is the same as Lines~\ref{line:tos:popstack}-\ref{line:tos:backinsert} in Algorithm~\ref{alg:tos}. In the following two paragraphs, we will focus on type (2) and type (3).

When $l < i^{\dagger} < \infty$, the kernel is of type (2), and CDITO updates the parent's search status to $(i',j^{\dagger}-1,l')$ (Line~\ref{line:cdito:sibling-move}). This prunes all the descendants of the current total order, its siblings generated by taking order moves $(i' \rightarrow j'+1),(i' \rightarrow j'+2),..,(i' \rightarrow j^{\dagger}-1)$, and all the descendants of these siblings. Since this kernel requires moving $p_{i'}$ after $p_{j^{\dagger}}$, some conflicts exist in these siblings until $(i' \rightarrow j^{\dagger})$ is taken. By Lemma~\ref{lemma:fixed_in_children}, these conflicts also remains in the descendants of these siblings, and thus all these total orders can be safely pruned.

When $i^{\dagger} = \infty$, the kernel is of type (3), CDITO updates the parent's search status to $(i'+1,i'+1,l')$ (Line~\ref{line:cdito:prune-cluster}). This prunes all the descendants of the current total order, all its siblings with level $l$, and all the descendants of these siblings. Since there exist some ordering conflicts that require moving an event larger than $l$, these conflicts remain in its siblings with level $l$. By Lemma~\ref{lemma:fixed_in_children}, these conflicts also remain in the descendants of this total order and these siblings as well.

\section{Experiments}\label{section:experiment}
To evaluate the effect of incorporating the conflicts from state and time for the total order search, we benchmarked CDITO against ITO \cite{wang2015} on temporal network configuration problems with different complexity and size. These problems involve routing flows and allocating bandwidth resources with respect to requirements on loss, delay, bandwidth, and deadlines. Note that our motivating example is also a temporal network configuration problem.

The problems were provided by a communication network simulator that generates network flows with random duration constraints and characteristic requirements on a meshed network. The simulator setup is as follows: (1) the mission horizon is 300s; (2) the meshed network has 16 nodes and 240 links; (3) the required loss, delay, and bandwidth of each link are uniformly generated from continuous intervals [0.1,0.3]\%, [0.1,0.3]s, and [500,1000]kbps; (4) the loss, delay, throughput, minimum duration of each network flow are uniformly generated from  [0.1,0.3]\%, [0.1,0.3]s, [600,1000]kbps, and [20,80]s; (5) the generator adds temporal constraints between randomly chosen events with a duration (0,100], and the number of temporal constraints is one fifth of the number of flows.

We combined two sub-solvers to construct the consistency function $h$: (1) we used a CP solver in \cite{chen2018radmax} to reason over routing and bandwidth allocation with respect to loss, delay, and bandwidth constraints; (2) Incremental Temporal Consistency \cite{shu2005enabling} was used to check the temporal consistency of plans.

We tested five scenarios of 10, 20, 30, 40, and 50 flows with CDITO and ITO. We ran 100 trials for each scenario, and the timeout, which is the duration between two replan requests in real-world experimental devices, was 20 seconds.

\begin{table}[htb]\scriptsize
\centering
\begin{tabular}{|c||c|c|c|c|c|c|}
\hline
\multirow{2}{*}{\textbf{\#flows}} & \multicolumn{3}{l|}{\hspace{0.09\columnwidth}\textbf{CDITO}} & \multicolumn{3}{l|}{\hspace{0.11\columnwidth}\textbf{ITO}} \\ 
\cline{2-7} 
& \textbf{\#solved} & \textbf{$N_S$}  & \textbf{$N_U$} & \textbf{\#solved} & \textbf{$N_S'$} & \textbf{$N_U'$} \\ \hline
10 & 94 & 7 & 221 & 9 & 1 & 233 \\ \hline
20 & 91 & 6 & 27 & 5 & 1 & 38 \\ \hline
30 & 86 & 5 & 10 & 6 & 1 & 18 \\ \hline
40 & 82 & 4 & 16 & 11 & 1 & 21 \\ \hline
50 & 74 & 5 & 14 & 8 & 1 & 24 \\ \hline
\end{tabular}
\caption{Experimental results. \textbf{\#solved}: number of solved trials; \textbf{$N_S$}, \textbf{$N_U$}: average number of calls to $h$ in solved and unsolved trails by using CDITO; \textbf{$N_S'$}, \textbf{$N_U'$}: average number of calls to $h$ in solved and unsolved trails by using ITO.}
\label{table:result}
\end{table}

Table~\ref{table:result} shows that CDITO solves most of the problems, while ITO solves few in 20 seconds. It can be seen that CDITO finds consistent solutions quickly after calling $h$ around five times in all the solved trials, which demonstrates that CDITO is capable of using conflicts to efficiently guide search and avoid unnecessary order generation or consistency check. However, $N_S'$ equals 1 in all the solved trials, which means, in large-scale problems, ITO can find solutions only if a good initial order is generated. Overall, as \#flows increases, checking grounded consistency is more expensive. Thus, both methods generate fewer orders in unsolved trials, and we observe the significant decreases of $N_U$ and $N_U'$. In every scenario, $N_U$ is slightly less than $N_U'$, which demonstrates that efforts put on reasoning over conflicts are not expensive compared to other costs. Note that, as CDITO prunes a large portion of total orders, CDITO goes further within the total order tree than ITO with the same number of order generations.

\section{Conclusion}
In this paper, we presented CDITO, a systematic and incremental algorithm that efficiently orders the events in a partially ordered plan by applying conflict-directed search on a tree of total orders. Given the ordering conflicts discovered in the search, CDITO is able to generate resolutions to skip inconsistent total orders by exploiting the special structure of the total order tree. During the search, our method also extracts implicit ordering relations from grounded consistency check such as state and temporal consistency. CDITO thus avoids unnecessary and expensive state and temporal consistency check. This is supported by our experiments on temporal network configuration problems generated by a communication network simulator, which empirically demonstrate the efficiency of CDITO over ITO.

\subsubsection{Acknowledgements.} This project was funded by the Defense Advanced Research Projects Agency under grant Contract No. HR0011-15-C-0098.

\bibliographystyle{aaai}
\small{\bibliography{bib}}
\end{document}